\newtheorem {thm}{Theorem}[section]
\newtheorem{fact}[thm]{Fact}
\newtheorem {lem}[thm]{Lemma}
\newtheorem{defn}[thm]{Definition}
\def\Ind{\setbox0=\hbox{$x$}\kern\wd0\hbox to 0pt{\hss$\mid$\hss} \lower.9\ht0\hbox to 0pt{\hss$\smile$\hss}\kern\wd0}
\def\Notind{\setbox0=\hbox{$x$}\kern\wd0\hbox to 0pt{\mathchardef \nn=12854\hss$\nn$\kern1.4\wd0\hss}\hbox to 0pt{\hss$\mid$\hss}\lower.9\ht0 \hbox to 0pt{\hss$\smile$\hss}\kern\wd0}
\numberwithin{equation}{section}
\newcommand{\m}{\mathbb }
\newcommand{\mc}{\mathcal }
\DeclareMathOperator{\Ldim}{Ldim}
\DeclareMathOperator{\dom}{dom}
\title{Applications of Littlestone dimension to query learning and to compression}
\begin{document}

\maketitle

\begin{abstract}
In this paper we give several applications of Littlestone dimension. The first is to the model of \cite{angluin2017power}, where we extend their results
for learning by equivalence queries with random counterexamples. Second, we extend that model to infinite concept classes with an additional source of randomness. Third, we give improved results
on the relationship of Littlestone dimension to classes with extended $d$-compression schemes, proving a strong version of a conjecture of \cite{floyd1995sample} for Littlestone dimension.

\end{abstract}

\section{Introduction}

In query learning, a learner attempts to identify an unknown concept from a collection via a series of data requests called queries. Typically, algorithms designed for learning in this setting attempt to bound the number of required queries to identify the target concept in the worst case scenario. If one imagines the queries of the learner being answered by a teacher, the usual setup imagines the teacher answering queries in an adversarial manner, with minimally informative answers. Alternatively, for a given algorithm, the bounds for the traditional model are on the \emph{worst-case answers} over \emph{all potential targets}. In variations of the model, one of these two factors is usually modified. 

For instance, \cite{kumar2021teaching} studies the case in which the answers are assumed to be maximally informative in a certain sense. In this manuscript, we first work in the setup originating with \cite{angluin2017power}, where we assume that the answers to the queries are randomly selected with respect to some fixed probability distribution.

Consider a concept class $\mc C = \{C_1, \ldots , C_n \}, $ subsets of a fixed set $X$. Fix a target concept $A \in \mc C$. An \emph{equivalence query} consists of the learner submitting a hypothesis $B \in \mathcal{C}$ to a teacher, who either returns \emph{yes} if $A = B$, or a counterexample $x \in A \triangle B$. In the former case, the learner has learned $A$, and in the latter case, the learner uses the new information to update and submit a new hypothesis. 

\cite{angluin2017power} fix a probability distribution $\mu$ on $X$ and assume that the teacher selects the counterexamples randomly with respect to $\mu$ restricted to $A \triangle B$. 
They show that for a concept class $\mc C$ of size $n$, there is an algorithm in which the expected number of queries to learn any concept is at most $\log _2 (n).$ It is natural to wonder whether there is a combinatorial notion of dimension which can be used to bound the expected number of queries independent of the size of the class - perhaps even in infinite classes. In fact, \cite{angluin2017power} (Theorem 25) already consider this, and show that the VC-dimension of the concept class is a lower bound on the number of expected queries. On the other hand, \cite{angluin2017power} (Theorem 26), using an example of \cite{littlestone1988learning}, show that the VC-dimension \emph{cannot} provide an upper bound for the number of queries. 

The motivation for bounds depending on some notion of dimension rather than the number of concepts is two-fold: 
\begin{itemize} 
\item Many combinatorial notions of dimension (e.g. Littlestone or VC) of a class $\mc C$ can be small while $|\mc C|$ is large.
\item Investigating this model of learning in settings where $\mc C$ is an infinite class will require methods and bounds which do not use $|\mc C|$. 
\end{itemize} 

We show that the Littlestone dimension provides such an upper bound; we give an algorithm which yields a bound which is linear in the Littlestone dimension for the expected number of queries needed to learn any concept. In Section \ref{randomeq} we establish the bounds for finite concept classes $\mc C$. 

In Section \ref{learningmore} we give a specific example which shows finite Littlestone dimension of a infinite class $\mc C$ is not sufficient to guarantee learnability of the class in the model of \cite{angluin2017power}. That is, we show the expected number of queries is impossible to bound over all target concepts even in very simple infinite classes. Suppose that the target concept is itself selected randomly with respect to some (perhaps unrelated to the feedback mechanism) probability distribution. In this case, we give an algorithm so that the expected number of queries (over both sources of randomness) is at most $\tilde{O}(d)$ where $d $ is the Littlestone dimension of the class $\mc C$. This result uses the bounds developed in Section \ref{randomeq} in an essential way, in particular by using the finite class's Littlestone dimension instead of its size.

In Section \ref{compression}, we give another application of Littlestone dimension - to compression schemes which answers a question of \cite{johnson2010compression} on $d$-compression with $b$ extra bits, a notion originating with \cite{floyd1995sample}. The existence of a $d$-compression is closely related to various notions of learning; $d$-compressibility of a class $\mc C$ implies the class has VC-dimension at most $d$. A famous conjecture of \cite{floyd1995sample} asks if the every VC-class has a $d$-compression where $d$ is the VC-dimension.\footnote{Resolving whether there is an $O(d)$ compression has a reward of 600 dollars \cite{10.1007/978-3-540-45167-9_60}.} Our result in Section \ref{compression} proves a strong version of the conjecture for Littlestone dimension.


\section{Random counterexamples and EQ-learning} \label{randomeq}

In this section, we essentially work in the setting of \cite{angluin2017power} with slightly different notation. Throughout this section, let $X$ be a finite set, let $\mc C$ be a set system on $X$, and let $\mu$ be a probability measure on $X$. For $A,B \in \mc C$, let $$\Delta (A,B) = \{x \in X \, | \, A(x) \neq B(x) \}$$ denote the symmetric difference of $A$ and $B$. 

\begin{defn} We denote, by $\mc C_{\bar x = \bar i}$ for $\bar x \in X^n$ and $\bar i \in \{0,1\}^n$, the set system  
$ \{A \in \mc C \, | \, A(x_j )= i_j , \, j=1, \ldots , n \}.$ 
For $A \in \mc C$ and $a \in X$, we let $$u(A,a) = \Ldim (\mc C) - \Ldim (\mc C _{a = A(a)} ).$$ 
\end{defn} 
For any $a \in X,$ either $\mc C_{a=1}$ or $\mc C_{a=0}$ has Littlestone dimension strictly less than that of $\mc C$ and so: 
\begin{lem} \label{ulemma} For $A, B \in \mc C$ and $a \in X$ with $A(a) \neq B(a),$ 
$$u(A,a ) + u (B, a) \geq 1.$$ 
\end{lem}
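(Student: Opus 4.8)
The plan is to reduce the inequality to the splitting fact recorded immediately above the statement: for any $a \in X$, at least one of $\mc C_{a=0}$, $\mc C_{a=1}$ has Littlestone dimension strictly smaller than $\Ldim(\mc C)$. Before using it, I would first note the monotonicity of Littlestone dimension under passing to subclasses: since $\mc C_{a=i} \subseteq \mc C$, any Littlestone tree shattered by $\mc C_{a=i}$ is also shattered by $\mc C$, so $\Ldim(\mc C_{a=i}) \leq \Ldim(\mc C)$ for $i \in \{0,1\}$. In particular $u(A,a) \geq 0$ and $u(B,a) \geq 0$ for all $A, B \in \mc C$ and all $a \in X$; this is what lets us afford to throw away the nonnegative term later.

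Next I would use the hypothesis $A(a) \neq B(a)$, which forces $\{A(a), B(a)\} = \{0,1\}$, hence $\{\mc C_{a=A(a)}, \mc C_{a=B(a)}\} = \{\mc C_{a=0}, \mc C_{a=1}\}$. Expanding the definition of $u$ then gives
$$u(A,a) + u(B,a) = 2\Ldim(\mc C) - \Ldim(\mc C_{a=A(a)}) - \Ldim(\mc C_{a=B(a)}) = 2\Ldim(\mc C) - \Ldim(\mc C_{a=0}) - \Ldim(\mc C_{a=1}).$$
Now invoke the splitting fact: without loss of generality $\Ldim(\mc C_{a=0}) \leq \Ldim(\mc C) - 1$. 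Whichever of $A, B$ takes value $0$ at $a$ — say it is $A$, so $A(a)=0$ and $B(a)=1$ — has $u(A,a) = \Ldim(\mc C) - \Ldim(\mc C_{a=0}) \geq 1$, while $u(B,a) \geq 0$ by the monotonicity step, so the sum is at least $1$. The symmetric case $B(a) = 0$ is identical. This closes the argument.

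The only point requiring genuine content is the quoted splitting fact itself, which I would treat as the main obstacle if it were not already granted: if both $\mc C_{a=0}$ and $\mc C_{a=1}$ realized $d := \Ldim(\mc C)$, one could take depth-$d$ Littlestone trees shattered by each, place a new root labeled by the point $a$ above them (the $0$-child leading to the tree for $\mc C_{a=0}$, the $1$-child to that for $\mc C_{a=1}$), and obtain a depth-$(d+1)$ tree shattered by $\mc C$, contradicting $\Ldim(\mc C) = d$. Everything else in the proof is bookkeeping with the definition of $u$ and the subclass monotonicity of Littlestone dimension.
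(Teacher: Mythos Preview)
Your proof is correct and follows exactly the route the paper indicates: the paper simply records the splitting fact (for any $a$, one of $\mc C_{a=0}$, $\mc C_{a=1}$ has strictly smaller Littlestone dimension) and states the lemma as an immediate consequence, while you spell out the straightforward bookkeeping that makes this implication explicit. Your added justification of the splitting fact via gluing two depth-$d$ trees at a new root is the standard argument and is a welcome elaboration of what the paper takes for granted.
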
 
Next, we define a directed graph which is similar to the \emph{elimination graph} of \cite{angluin2017power}. 

\begin{defn}
We define the \emph{thicket query graph} $G_{TQ}(\mc C , \mu )$ to be the weighted directed graph on vertex set $\mc C$ such that the directed edge from $A$ to $B$ has weight $d(A,B)$ equal to the expected value of $\Ldim(\mc C)- \Ldim(\mc C _{x = B(x)}) $ over $x \in \Delta (A,B)$ with respect to the distribution $\mu |_{\Delta(A,B)}.$ \footnote{Here one should think of the query by the learner as being $A$, and the actual hypothesis being $B$. The teacher samples from $\Delta(A,B)$, and the learner now knows the value of the hypothesis on $x$.} 
\end{defn} 

\begin{defn} The \emph{query rank} of $A \in \mc C$ is defined as: 
$\inf _{B \in \mc C } (d(A,B)).$ 
\end{defn}

\begin{lem} \label{Lem13} For any $A \neq B \in \mc C$, $d(A, B)+d(B,A) \geq 1.$
\end{lem}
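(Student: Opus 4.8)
The statement should follow almost immediately by combining the definition of the weights $d(\cdot,\cdot)$ with Lemma~\ref{ulemma} applied pointwise. The plan is as follows.

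First I would record the key symmetry: $\Delta(A,B) = \Delta(B,A)$ as subsets of $X$, so the two quantities $d(A,B)$ and $d(B,A)$ are both expectations taken over the \emph{same} probability space, namely $X$ equipped with $\mu|_{\Delta(A,B)}$ (which is well defined since $A \neq B$, so $\Delta(A,B) \neq \emptyset$ and has positive $\mu$-measure under the standing assumptions on $\mu$). Unwinding the definition of the thicket query graph, $d(A,B)$ is the expectation over $x \sim \mu|_{\Delta(A,B)}$ of $\Ldim(\mc C) - \Ldim(\mc C_{x = B(x)})$, which is exactly $u(B,x)$; symmetrically, $d(B,A) = \mathbb{E}_{x \sim \mu|_{\Delta(A,B)}}[u(A,x)]$.

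Next I would add the two expressions and use linearity of expectation:
$$d(A,B) + d(B,A) = \mathbb{E}_{x \sim \mu|_{\Delta(A,B)}}\bigl[\,u(A,x) + u(B,x)\,\bigr].$$
Now every $x$ in the support of $\mu|_{\Delta(A,B)}$ lies in $\Delta(A,B)$, hence satisfies $A(x) \neq B(x)$; Lemma~\ref{ulemma} then gives $u(A,x) + u(B,x) \geq 1$ for every such $x$. Taking expectations of this pointwise inequality yields $d(A,B) + d(B,A) \geq 1$, as desired.

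I do not expect any real obstacle here: the only thing to be slightly careful about is that the conditional measure $\mu|_{\Delta(A,B)}$ is legitimate and that the identification of the edge weight with $\mathbb{E}[u(B,x)]$ is just a rewriting of the definition. The substance of the argument is entirely carried by Lemma~\ref{ulemma}, whose pointwise content (for $a$ with $A(a) \neq B(a)$, conditioning on either value of the coordinate $a$ strictly drops the Littlestone dimension of at least one of the two conditioned classes) is the genuine input.
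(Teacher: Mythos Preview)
Your proposal is correct and is essentially identical to the paper's own proof: the paper also notes $\Delta(A,B)=\Delta(B,A)$, writes $d(A,B)+d(B,A)$ as the single expectation (written there as an explicit sum) $\sum_{a\in\Delta(A,B)}\frac{\mu(a)}{\mu(\Delta(A,B))}\bigl(u(A,a)+u(B,a)\bigr)$, and then applies Lemma~\ref{ulemma} pointwise to bound this below by $1$. The only cosmetic difference is that you phrase things in expectation notation while the paper spells out the weighted sum.
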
 
\begin{proof} 
Noting that $\Delta (A,B) = \Delta (B,A),$ and using Lemma \ref{ulemma}:
\begin{eqnarray*} d(A,B) + d(B,A) & = & \sum _{a \in \Delta (A,B)} \frac{\mu(a)}{\mu (\Delta (A,B))} (u(A,a)+u(B,a)) \\
& \geq  & \sum _{a \in \Delta (A,B)} \frac{\mu(a)}{\mu (\Delta (A,B))}\\ & = & 1. \end{eqnarray*} 
\end{proof}


\begin{defn}[\cite{angluin2017power}, Definition 14] 
Let $G$ be a weighted directed graph and $ l \in \m N, \,  l >1.$ A \emph{deficient $l$-cycle} in $G$ is a sequence $v_0, \ldots v_{l-1}$ of distinct vertices such that for all $i \in [l]$, $d(v_i , v_{(i+1) \, (\mod l) } ) \leq \frac{1}{2}$ with strict inequality for at least one $i \in [l]$. 
\end{defn} 


The next result is similar to Theorems 16 (the case $l=3$) and Theorem 17 (the case $l >3$) of \cite{angluin2017power}, but our proof is rather different (note that the case $l=2$ follows easily from Lemma \ref{Lem13}).

\begin{thm} \label{nocycles} The thicket query graph $G_{TQ} (\mc C , \mu)$ has no degenerate $l$-cycles for $l \geq 2.$ 
\end{thm}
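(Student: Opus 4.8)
The plan is to rule out deficient $l$-cycles by a weighted counting argument on the sum of edge weights around the cycle, exploiting the pairing identity from Lemma~\ref{Lem13}. Suppose for contradiction that $v_0, \ldots, v_{l-1}$ is a deficient $l$-cycle, so that each consecutive edge weight $d(v_i, v_{i+1})$ (indices mod $l$) is at most $\tfrac12$, with strict inequality somewhere; hence $\sum_{i} d(v_i, v_{i+1}) < l/2$. The idea is to get a contradiction by producing a lower bound of $l/2$ for a suitable sum of edge weights along the cycle, or more precisely by comparing the ``forward'' trip around the cycle with the ``backward'' trip.

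First I would unfold the definition of $d(v_i, v_{i+1})$ as $\sum_{a \in \Delta(v_i,v_{i+1})} \frac{\mu(a)}{\mu(\Delta(v_i,v_{i+1}))} u(v_{i+1}, a)$, and similarly for the reverse edge $d(v_{i+1}, v_i)$, which uses $u(v_i, a)$ over the same set $\Delta(v_i, v_{i+1})$. By Lemma~\ref{Lem13}, $d(v_i,v_{i+1}) + d(v_{i+1}, v_i) \ge 1$ for each $i$, so $\sum_{i=0}^{l-1} \big( d(v_i, v_{i+1}) + d(v_{i+1}, v_i) \big) \ge l$. The deficiency hypothesis forces $\sum_i d(v_i, v_{i+1}) < l/2$, hence we would need $\sum_i d(v_{i+1}, v_i) > l/2$ — i.e. the reverse cycle is ``surplus''. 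This alone is not yet a contradiction, so the real work is to show that a deficient cycle cannot coexist with this surplus reverse cycle; the natural mechanism is that going around the cycle and tracking the quantity $\Ldim(\mc C_{\bar x = \bar\imath})$ as one conditions on successive sampled coordinates must return to a consistent value, and the telescoping of these Littlestone-dimension drops cannot be strictly negative around a loop.

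Concretely, the key step I expect to carry the proof is a telescoping/monotonicity observation: for a vertex $B$ and point $a$, $u(B,a) = \Ldim(\mc C) - \Ldim(\mc C_{a = B(a)})$ measures how much conditioning on the correct label at $a$ drops the Littlestone dimension, and conditioning only ever decreases Littlestone dimension. I would try to set up, for each edge, a choice of witnessing point (or an averaging over $\mu|_{\Delta}$) so that the sum $\sum_i d(v_i, v_{i+1})$ is bounded below by a telescoping sum of the form $\sum_i \big(\Ldim(\mc C_{S_i}) - \Ldim(\mc C_{S_{i+1}})\big)$ for an increasing chain of constraint sets, together with the reverse chain, and then use $d(A,B)+d(B,A)\ge 1$ edgewise to force the forward sum to be at least $l/2$, contradicting deficiency.

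The main obstacle will be that $d(v_i, v_{i+1})$ is an \emph{average} over $\Delta(v_i, v_{i+1})$ rather than a single conditioning step, and the sets $\Delta(v_i, v_{i+1})$ for different $i$ overlap in complicated ways, so a naive telescoping does not literally close up around the cycle. I would handle this by arguing at the level of the defining inequality: fix any $i$, pair edge $i$ with its reverse using Lemma~\ref{Lem13} to get $\ge 1$, sum over all $i$ to get $\ge l$, and then show that \emph{deficiency of the forward cycle} already forces, via the same pointwise bound $u(A,a) + u(B,a) \ge 1$ applied inside each $\Delta$, that strict inequality cannot be confined to a single edge without violating $d(v_i,v_{i+1}) + d(v_{i+1},v_i) \ge 1$ — in other words, the slack created by the one strict edge in the forward direction has no compensating structure. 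Making this rigorous — turning ``the reverse cycle must be surplus'' into an actual contradiction — is the crux, and I expect it to hinge on a careful use of the fact that each $u(B,a) \in [0, \Ldim(\mc C)]$ together with Lemma~\ref{ulemma}'s exact pairing $u(A,a)+u(B,a)\ge 1$ when $A(a)\ne B(a)$, pushed through the averaging.
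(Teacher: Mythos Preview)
Your diagnosis of the obstacle is accurate, but the proposal does not close the gap. You correctly observe that summing Lemma~\ref{Lem13} edgewise only yields $\sum_i d(v_{i+1},v_i) > l/2$, which is no contradiction; and you correctly suspect that the culprit is the mismatch of normalizing factors $\mu(\Delta(v_i,v_{i+1}))$ from edge to edge. But the two remedies you sketch will not work. The telescoping idea fails because $u(B,a)=\Ldim(\mc C)-\Ldim(\mc C_{a=B(a)})$ always refers to the \emph{fixed} ambient class $\mc C$, not to a progressively conditioned class, so there is no chain $\Ldim(\mc C_{S_0})\ge \Ldim(\mc C_{S_1})\ge\cdots$ hiding inside $\sum_i d(v_i,v_{i+1})$ to telescope. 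And the final paragraph's suggestion, that ``strict inequality cannot be confined to a single edge without violating $d(v_i,v_{i+1})+d(v_{i+1},v_i)\ge 1$'', is simply not true: for $l\ge 3$ there is no a priori relation between the forward edge $d(v_i,v_{i+1})$ and the forward edge $d(v_{i+1},v_{i+2})$, so one edge being $<\tfrac12$ places no constraint on the others.

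The missing idea is a \emph{re-indexing of the sum by partitions of the cycle vertices rather than by edges}. First clear denominators: the deficiency hypothesis becomes $\sum_{a\in\Delta(v_i,v_{i+1})}\mu(a)\,u(v_{i+1},a)\le \tfrac12\,\mu(\Delta(v_i,v_{i+1}))$, strict for some $i$. Now for each two-block partition $(G,H)$ of $\{v_0,\ldots,v_{l-1}\}$, let $D(G,H)$ be the set of $a\in X$ on which every vertex in $G$ takes one value and every vertex in $H$ the other. The point is that $\Delta(v_i,v_{i+1})$ is the disjoint union of $D(G,H)$ over those partitions separating $v_i$ from $v_{i+1}$. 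Summing the cleared inequalities over $i$ and regrouping by $(G,H)$, each partition contributes once for every $i$ at which the cycle crosses between $G$ and $H$; these crossings come in consecutive pairs (enter $G$, then later leave $G$), and for each such pair the two $u$-terms involve vertices on opposite sides of the partition evaluated at the same $a\in D(G,H)$, so Lemma~\ref{ulemma} gives $u(\cdot,a)+u(\cdot,a)\ge 1$ pointwise. On the right-hand side the same pair contributes $\tfrac12\mu(a)+\tfrac12\mu(a)=\mu(a)$. Hence LHS $\ge$ RHS after regrouping, contradicting the strict inequality. This partition decomposition is exactly what replaces your hoped-for telescoping; without it the argument does not close.
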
 

The analogue of Theorem 16 can be adapted in a very similar manner to the technique employed by \cite{angluin2017power}. However, the analogue of the proof of Theorem 17 falls apart in our context; the reason is that Lemma \ref{ulemma} is analogous to Lemma 6 of \cite{angluin2017power} (and Lemma \ref{Lem13} is analogous to Lemma 13 of \cite{angluin2017power}), but our lemmas involve inequalities instead of equations. The inductive technique of \cite[Theorem 17]{angluin2017power} is to shorten degenerate cycles by considering the weights of a particular edge in the elimination graph along with the weight of the edge in the opposite direction. Since one of those weights being large forces the other to be small (by the \emph{equalities} of their lemmas), the induction naturally separates into two useful cases. In our thicket query graph, things are much less tightly constrained - one weight of an edge being large does not force the weight of the edge in the opposite direction to be small. However, the technique employed in our proof seems to be flexible enough to adapt to prove Theorems 16 and 17 of \cite{angluin2017power}.

\begin{proof} Suppose the vertices in the degenerate $l$-cycle are $A_0, \ldots , A_{l-1} $. 
By the definition of degenerate cycles and $d(-,-),$ we have, for each $i \in \m Z / l \m Z$, that $$\sum _{a \in \Delta (A_i, A_{i+1}) } \frac{\mu(a)}{\mu (\Delta (A_i ,A_{i+1}))} u(A_i,a) \leq \frac{1}{2}.$$ Clearing the denominator we have

\begin{equation} \label{firsteqn} \sum _{a \in \Delta (A_i,A_{i+1})} \mu(a) u(A_i,a) \leq \frac{1}{2} \mu (\Delta (A_i,A_{i+1})). \end{equation} 
\emph{Note that throughout this argument, the coefficients are being calculated modulo $l$.} Notice that for at least one value of $i$, the inequality in \ref{firsteqn} must be strict. 

Let $G, H$ be a partition of $$\mc X =  \{ A_1, \ldots , A_l \}.$$ Now define $$D (G, H) := \left \{a \in X \, | \, \forall A_1,B_1 \in G, \, \forall A_2, B_2 \in H,\, A_1(a) = B_1 (a), \, A_2(a) = B_2 (a),  A_1(a) \neq A_2 (a) \right \}.$$  

The following fact follows from the definition of $\Delta (A, B) $ and $D(-,-)$. 
\begin{fact} \label{usefulfact}  The set $\Delta (A_i , A_{i+1} )$ is the disjoint union, over all partitions of $\mc X$ into two pieces $G,H$ such that $A_i \in G$ and $A_{i+1} \in H$ of the sets $D(G,H).$ 
\end{fact}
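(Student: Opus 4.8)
The plan is to prove Fact \ref{usefulfact} by a direct set-theoretic argument: we show that $x \in \Delta(A_i, A_{i+1})$ if and only if $x$ lies in exactly one of the sets $D(G,H)$ as $(G,H)$ ranges over the partitions of $\mc X$ with $A_i \in G$, $A_{i+1} \in H$. The key observation is that any $x \in X$ induces a partition of $\mc X$ according to the value $A(x) \in \{0,1\}$, and membership in $\Delta(A_i, A_{i+1})$ is precisely the statement that this partition separates $A_i$ from $A_{i+1}$.

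First I would fix $x \in X$ and define $G_x = \{A \in \mc X \mid A(x) = A_i(x)\}$ and $H_x = \{A \in \mc X \mid A(x) \neq A_i(x)\} = \{A \in \mc X \mid A(x) = 1 - A_i(x)\}$. This is manifestly a partition of $\mc X$ into two pieces (one possibly empty), with $A_i \in G_x$. Now unwinding the definition of $D(G,H)$: an element $x$ belongs to $D(G,H)$ exactly when every member of $G$ agrees with every other member of $G$ at $x$, every member of $H$ agrees with every other member of $H$ at $x$, and the common $G$-value differs from the common $H$-value; equivalently, $G$ and $H$ are precisely the two ``level sets'' of the function $A \mapsto A(x)$ on $\mc X$. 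Hence $x \in D(G,H)$ for the \emph{unique} partition $(G,H) = (G_x, H_x)$ (up to swapping the roles of the two pieces), and for no other partition. This already gives disjointness of the sets $D(G,H)$ over all unordered partitions.

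Next I would check the two inclusions. If $x \in \Delta(A_i, A_{i+1})$, then $A_i(x) \neq A_{i+1}(x)$, so $A_{i+1} \in H_x$; thus $(G_x, H_x)$ is a partition with $A_i \in G_x$ and $A_{i+1} \in H_x$, and by the previous paragraph $x \in D(G_x, H_x)$, so $x$ lies in the asserted disjoint union. Conversely, if $x \in D(G,H)$ for some partition with $A_i \in G$ and $A_{i+1} \in H$, then by definition of $D(G,H)$ we have $A_i(x) \neq A_{i+1}(x)$, i.e. $x \in \Delta(A_i, A_{i+1})$. Combining with the disjointness established above, $\Delta(A_i, A_{i+1})$ is the disjoint union of the $D(G,H)$ over partitions $(G,H)$ of $\mc X$ with $A_i \in G$, $A_{i+1} \in H$, as claimed.

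I do not anticipate a genuine obstacle here — the statement is essentially a bookkeeping identity. The one point requiring a little care is the convention on partitions: one must be consistent about whether $(G,H)$ and $(H,G)$ count as the same partition, and here the constraint ``$A_i \in G$ and $A_{i+1} \in H$'' pins down the ordering, so each relevant unordered partition that separates $A_i$ from $A_{i+1}$ is counted exactly once. A second minor point is that a piece of the partition could in principle be empty; but since $A_i \in G$ and $A_{i+1} \in H$ with $A_i \neq A_{i+1}$, both pieces of every partition under consideration are nonempty, so the definition of $D(G,H)$ is never vacuous in a problematic way. With these conventions fixed, the proof is the two-line verification sketched above.
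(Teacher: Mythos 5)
Your proof is correct, and it is exactly the verification the paper has in mind: the paper states the fact without proof, saying only that it ``follows from the definition of $\Delta(A,B)$ and $D(-,-)$,'' and your argument --- that each $x$ determines a unique level-set partition $(G_x,H_x)$ of $\mc X$, which separates $A_i$ from $A_{i+1}$ precisely when $x\in\Delta(A_i,A_{i+1})$ --- is the intended bookkeeping, including the correct handling of ordering and nonemptiness of the pieces.
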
 

Now, take the sum of the inequalities \ref{firsteqn} as $i$ ranges from $1$ to $l$. On the LHS of the resulting sum, we obtain $$\sum _{i=1} ^ l \left(  \sum _{G, H \text{ a partition of $\mc X$},\, A_i \in G, A_{i+1} \in H} \left( \sum_{a \in D(G,H)}  \mu(a) u(A_i,a) \right) \right).$$
On the RHS of the resulting sum we obtain 
$$ \frac{1}{2} \sum _{i=1} ^ l \left(  \sum _{G, H \text{ a partition of $\mc X$},\, A_i \in G, A_{i+1} \in H} \left( \sum_{a \in D(G,H)}  \mu(a) \right) \right).$$
Given a partition $G,H$ of $\{ A_1, \ldots , A_l \}$ we note that the term $D(G,H) = D(H,G)$ appears exactly once as an element of the above sum for a fixed value of $i$ exactly when $A_i \in G$ and $A_{i+1} \in H$ or $A_i \in H$ and $A_{i+1} \in G.$ 

Consider the partition $G,H$ of $\mc X$. Suppose that $A_j , A_{j+1} , \ldots , A_k $ is a block of elements each contained in $G$, and that $A_{j-1}, A_{k+1} $ are in $H$. Now consider the terms 
$i=j-1$ and $i=k$ of the above sums (each of which where $D(G,H)$ appears). 

On the left hand side, we have $\sum _{a \in D(G,H) } \mu (a) u ( A_{j-1} ,a )) $ and $\sum _{a \in D(G,H) } \mu (a) u ( A_{k} ,a )) $. Note that for $a \in D(G,H)$, we have $a \in \Delta (A_{j-1}, A_k).$ So, by Lemma \ref{ulemma}, we have

$$\sum _{a \in D(G,H) } \mu (a) u ( A_{j-1} ,a ) + \sum _{a \in D(G,H) } \mu (a) u ( A_{k} ,a )  \geq \sum _{a \in D(G,H) } \mu (a) .$$
On the RHS, we have $$\frac{1}{2} \left(\sum _{a \in D(G,H) } \mu (a) +\sum _{a \in D(G,H) } \mu (a)  \right) = \sum _{a \in D(G,H) } \mu (a) .$$ For each $G,H$ a partition of $X$, the terms appearing in the above sum occur in pairs as above by Fact \ref{usefulfact}, and so, we have the the LHS is at least as large as the RHS of the sum of inequalities \ref{firsteqn}, which is impossible, since one of the inequalities must have been strict by our degenerate cycle. 
\end{proof}

\begin{thm} \label{qrlowbound} There is at least one element $A \in \mc C$ with query rank at least $\frac{1}{2}$. 
\end{thm}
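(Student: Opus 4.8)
The plan is to derive the statement from Theorem \ref{nocycles} by a pigeonhole argument on orbits of a selector function. First I would argue by contradiction: suppose that \emph{every} $A \in \mc C$ has query rank strictly less than $\frac12$. Unwinding the definition (and reading the infimum over $B \neq A$, which is the only sensible convention since $\Delta(A,A) = \emptyset$ makes $d(A,A)$ vacuous), this says that for each $A \in \mc C$ there is some $B \in \mc C$ with $B \neq A$ and $d(A,B) < \frac12$. I would record one such choice for every $A$, obtaining a function $g \colon \mc C \to \mc C$ with $d(A, g(A)) < \frac12$ and, crucially, $g(A) \neq A$ for all $A$; that is, $g$ has no fixed point.

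Next I would exploit finiteness of $\mc C$. Starting from an arbitrary concept $A_0$ and iterating, $A_{m+1} = g(A_m)$, the sequence $(A_m)$ must repeat; let $j < k$ be least with $A_j = A_k$, so that $A_j, A_{j+1}, \dots, A_{k-1}$ are pairwise distinct. Relabel these as $v_0, \dots, v_{l-1}$ arranged cyclically, with $l = k-j$ and $v_{(i+1)\bmod l} = g(v_i)$ for every $i$. Since $g$ has no fixed point, $l \geq 2$, and the $v_i$ are distinct by construction. Each edge of this cycle satisfies $d(v_i, v_{(i+1)\bmod l}) = d(v_i, g(v_i)) < \frac12$, hence in particular $d(v_i, v_{(i+1)\bmod l}) \le \frac12$ for all $i$ with strict inequality for at least one $i$. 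Thus $v_0, \dots, v_{l-1}$ is a degenerate $l$-cycle with $l \geq 2$ in $G_{TQ}(\mc C, \mu)$, contradicting Theorem \ref{nocycles}. Therefore some $A \in \mc C$ has query rank at least $\frac12$.

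I do not expect a genuine obstacle here, since all the combinatorial work has been packaged into Theorem \ref{nocycles}. The two points that need care are purely bookkeeping: one must take the infimum defining query rank over $B \neq A$ so that the selector $g$ is well-defined and fixed-point-free, and one must note that this fixed-point-freeness is exactly what guarantees the extracted cycle has length at least $2$ (the length-$2$ case could alternatively be ruled out directly by Lemma \ref{Lem13}, and the general case again by Theorem \ref{nocycles}).
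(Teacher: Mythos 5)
Your proposal is correct and takes essentially the same approach as the paper: negate the claim, build a fixed-point-free selector, iterate to extract a cycle by finiteness, and contradict Theorem \ref{nocycles}. The only difference is that you spell out two small details the paper leaves implicit (the infimum defining query rank must range over $B \neq A$, and this is what guarantees the extracted cycle has length at least $2$), which is a welcome clarification but not a different argument.
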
 
\begin{proof} If not, then for every element $A \in \mc C$, there is some element $B \in \mc C$ such that $d(A,B) < \frac{1}{2}$. So, pick, for each $A \in \mc C$, an element $f(A)$ such that $d(A,f(A)) < \frac{1}{2}.$ Now, fix $A \in \mc C$ and consider the sequence of elements of $\mc C$ given by $(f^i (A))$; since $\mc C$ is finite, at some point the sequence repeats itself. So, take a list of elements $B, f(B) , \ldots , f^n (B) = B$. By construction, this yields a bad cycle, contradicting Theorem \ref{nocycles}. 
\end{proof}

\subsection{The thicket max-min algorithm}\label{app:Thicketminmax}
In this subsection we show how to use the lower bound on query rank proved in Theorem \ref{qrlowbound} to give an algorithm which yields the correct concept in linearly (in the Littlestone dimension) many queries from $\mathcal{C}$. The approach is fairly straightforward---essentially the learner repeatedly queries the highest query rank concept. The approach is similar to that taken in \cite[Section 5]{angluin2017power} but with query rank in place of their notion of \emph{informative}. 

Now we informally describe the thicket max-min-algorithm. At stage $i$, the learner is given information of a concept class $\mc C_i .$ The learner picks the query 
$$A= \text{arg max} _ {A \in \mc C_i} \left( \text{min}_ {B \in \mc C_i}  \, d_{\mc C_i}( A,B) \right).$$ 
The algorithm halts if the learner has picked the actual concept $C$. If not, the teacher returns a random element $a_i \in \Delta (A,C)$ at which point the learner knows the value of $C (a_i).$ Then $$\mc C_{i+1} = (\mc C_i) _{a_i= C(a_i)}.$$ Let $T(\mc C)$ be the expected number of queries before the learner correctly identifies the target concept.

\begin{thm} The expected number of queries to learn a concept in a class $\mc C$ is less than or equal to $2 \Ldim (\mc C).$ 
\end{thm}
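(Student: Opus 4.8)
The plan is to use the Littlestone dimension of the surviving class as a potential function and show that each query decreases it by at least $\tfrac12$ in expectation; since the potential starts at $\Ldim(\mc C)$ and must reach $0$, this forces the expected number of queries $\mathbb E[N]$ to be at most $2\Ldim(\mc C)$. I would first record the invariant that the target $C$ is never eliminated: the teacher's counterexample $a_i$ always lies in $\Delta(A,C)$, so the recorded constraint $C(a_i)$ is satisfied by $C$, whence $C \in \mc C_{i+1} = (\mc C_i)_{a_i = C(a_i)}$ and, inductively, $C \in \mc C_i$ for every $i$. In particular each $\mc C_i$ is a nonempty finite set system, and $\Ldim(\mc C_i) = 0$ precisely when $\mc C_i = \{C\}$, at which point the learner already knows the target (I adopt the convention that no equivalence query is issued against a singleton class).

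The crux is the per-query estimate. Fix a stage $i$ at which a query is made, so $\Ldim(\mc C_i) \ge 1$, and let $A = A^{(i)}$ be the queried max-min concept. Applying Theorem \ref{qrlowbound} with $\mc C_i$ in place of $\mc C$, some element of $\mc C_i$ has query rank at least $\tfrac12$; since $A$ maximizes query rank over $\mc C_i$, its query rank is at least $\tfrac12$, and because $C \in \mc C_i$ this gives $d_{\mc C_i}(A,C) \ge \tfrac12$. If the query succeeds ($A = C$), the potential drops from $\Ldim(\mc C_i) \ge 1$ to $0$; otherwise, by the definition of $d_{\mc C_i}(A,C)$ as the $\mu|_{\Delta(A,C)}$-average of $\Ldim(\mc C_i) - \Ldim((\mc C_i)_{x = C(x)})$,
$$\mathbb E\big[\,\Ldim(\mc C_{i+1}) \,\big|\, \mathcal F_i\,\big] \;=\; \Ldim(\mc C_i) - d_{\mc C_i}(A,C) \;\le\; \Ldim(\mc C_i) - \tfrac12,$$
where $\mathcal F_i$ is the history through stage $i$. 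Either way the conditional expected decrease in the potential contributed by the query is at least $\tfrac12$.

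It then remains to glue the local estimates together. Along any run the successive potential decreases are nonnegative and telescope to $\Ldim(\mc C) - 0$; taking expectations, interchanging the (countable) sum with the expectation by nonnegativity, and applying the tower property with the bound above yields $\Ldim(\mc C) \ge \tfrac12\,\mathbb E[N]$, i.e.\ $T(\mc C) = \mathbb E[N] \le 2\Ldim(\mc C)$. Equivalently, one checks that $\Ldim(\mc C_i) + \tfrac12\cdot(\text{number of queries so far})$ is a supermartingale and invokes optional stopping, the stopping time being almost surely finite since at each query the drop in potential, a nonnegative integer of conditional mean $\ge \tfrac12$, is $\ge 1$ with conditional probability $\ge \tfrac12$. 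Essentially all of the mathematical content is packaged in Theorem \ref{qrlowbound}, so the main obstacle is not a deep one: it is the bookkeeping needed to extract the exact constant $2$, which requires charging the terminal successful query against the $\ge 1$ units of Littlestone dimension still present in $\mc C_i$, together with the convention of not querying a singleton class. (As in \cite{angluin2017power}, one assumes throughout that $\mu$ is such that the conditional distribution $\mu|_{\Delta(A,C)}$ is well defined whenever it is invoked.)
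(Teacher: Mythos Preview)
Your argument is correct and follows the same route as the paper: use Theorem \ref{qrlowbound} to guarantee that each query drops $\Ldim(\mc C_i)$ by at least $\tfrac12$ in expectation, then convert this into $\mathbb E[N]\le 2\Ldim(\mc C)$; your telescoping/optional-stopping packaging is in fact a cleaner way to extract the bound than the paper's informal ``$2n$ queries give $n$ expected drops'' phrasing. One small quibble: the parenthetical inference that a nonnegative integer of conditional mean $\ge\tfrac12$ must be $\ge 1$ with conditional probability $\ge\tfrac12$ is not valid in general (take the drop to equal $d$ with probability $1/(2d)$), but you only use it for almost-sure termination, which already follows from $|\mc C_i|$ strictly decreasing after every unsuccessful query; the paper makes the same slip, and your main argument does not depend on it.
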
 
\begin{proof} The expected drop in the Littlestone dimension of the concept class induced by any query before the algorithm terminates is at least ${1}/{2}$ by Theorem \ref{qrlowbound}; so the probability that the drop in the Littlestone dimension is positive is at least ${1}/{2}$ for any given query. So, from $2n$ queries, one expects at least $n$ drops in Littlestone dimension, at which point the class is learned. \end{proof} 



\section{Equivalence queries with random counterexamples and random targets} \label{learningmore} 

Let $\mc C$ consist the collection of intervals $\left\{\left(\frac{1}{n+1}, \frac{1}{n}\right) \, | \, n \in \m N \right\}$ with $\mu$ the Lebesgue measure on the unit interval. This concept class has Littlestone dimension one since any two concepts are disjoint. There is no upper bound on the number of expected queries (using the model with random counterexamples of the previous section) which is uniform over all targets.

To see why, suppose the learner guesses interval $\left(\frac{1}{n+1}, \frac{1}{n}\right)$ for some $n$. For any $\epsilon>0$ there is $N \in \m N$ such that with probability greater than $1-\epsilon$, the learner gets a counterexample from the interval they guessed, $\left(\frac{1}{n+1}, \frac{1}{n}\right)$. Of course, even with this additional information, no matter the learner's guess at any stage at which they have received only negative counterexamples, this is clearly still the case. Thus, there can be no bound on expected queries which is uniform over all target concepts. 

In this section we introduce an additional source of randomness which allows for learning over infinite classes $\mc C$.\footnote{One might also think of the random EQ learning of Angluin and Dohrn as analysing the maximum number of expected number of queries over all possible targets, while our model will analyze the \emph{expected} number of queries where the expectation is taken over the the concepts (with a fixed but arbitrary distribution) and over the counterexamples.} So, suppose $\mc C$ is a (possibly infinite) set of concepts on a set $X$. Suppose that we have probability measures $\mu $ on $X$ and $\tau$ on $\mc C$. Suppose a target $A \in \mc C$ is selected randomly according to the distribution $\tau$ and the counterexamples to equivalence queries are selected randomly according to the distribution $\mu.$

\begin{thm} Suppose that $\mc C$ is countable with finite Littlestone dimension $d$. There is an algorithm such that the expected number of queries over distributions $\mu$ on $X$ and $\tau$ on $\mc C$  is at most $\tilde{O}(d)$. 
\end{thm}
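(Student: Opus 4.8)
The plan is to bootstrap the finite-class thicket max-min algorithm of Section~\ref{app:Thicketminmax} to the countable setting by replaying it on a sequence of growing finite sub-classes adapted to $\tau$. First, fix an enumeration $\mc C=\{C_1,C_2,\dots\}$. Since $\tau$ is a probability measure on a countable set, for each $k$ there is a finite prefix $W_k=\{C_1,\dots,C_{n_k}\}$ with $\tau(W_k)\ge 1-2^{-k}$, and we take $W_1\subseteq W_2\subseteq\cdots$. The algorithm runs in phases, maintaining the version space $V$ of all concepts consistent with the counterexamples seen so far (so $V$ always contains the target and $\Ldim(V)\le d$). In phase $k$ it runs the thicket max-min algorithm on the finite class $W_k\cap V$, updating $V$ after each returned counterexample; it halts as soon as it identifies the target, and it advances to phase $k+1$ (keeping $V$) as soon as either $W_k\cap V$ becomes empty --- which forces the target to lie outside $W_k$ --- or it has used $4\Ldim(W_k\cap V)+1$ queries in this phase without halting. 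Since $\Ldim(W_k\cap V)\le d$, every phase costs $O(d)$ queries, and the cap is computable, so the algorithm needs to know $\tau$ but not $d$.

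Next I would bound the expected number of phases reached. Let $k^\ast=\min\{k:\text{target}\in W_k\}$; nestedness gives $\Pr[k^\ast>k]=\Pr[\text{target}\notin W_k]\le 2^{-k}$, whence $\mathbb E[k^\ast]=\sum_{k\ge0}\Pr[k^\ast>k]\le 1+\sum_{k\ge1}2^{-k}=2$. Conditioning on $k^\ast$: for $j<k^\ast$ the target is outside $W_j$, so the thicket max-min run there queries only wrong hypotheses, removes at least one concept of $W_j\cap V$ per query, and hence advances without identifying the target --- so phase $k^\ast$ is reached with probability $1$. For $j\ge k^\ast$ the target lies in $W_j\cap V$, so by the theorem of Section~\ref{app:Thicketminmax} the expected number of phase-$j$ queries (given phase $j$ is reached) is at most $2\Ldim(W_j\cap V)$, and Markov's inequality bounds the probability of advancing past phase $j$ by less than $\tfrac12$. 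Chaining these gives $\Pr[\text{reach phase }k\mid k^\ast]\le 2^{-(k-k^\ast)}$ for $k>k^\ast$, so the conditional expected number of phases reached is at most $k^\ast+\sum_{k>k^\ast}2^{-(k-k^\ast)}=k^\ast+1$. Taking expectations and multiplying by the $O(d)$ per-phase bound yields an expected total of $O(d\cdot\mathbb E[k^\ast+1])=O(d)$, which is in particular $\tilde O(d)$.

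The step I expect to be the crux is the handling of targets lying outside the current working set, which is also why a naive ``union over finite sub-classes'' argument fails: running the finite algorithm on $W_k$ when the target is absent can cost as many as $|W_k\cap V|$ queries, and this grows without bound as $k\to\infty$ while the probability of the event, $\tau(\mc C\setminus W_k)$, only decreases to $0$ --- and for some $\tau$ the product $|W|\cdot\tau(\mc C\setminus W)$ stays bounded below over all finite $W$, so the two effects cannot be traded off directly. The two ingredients that resolve this are (i) capping each phase at $O(\Ldim(W_k\cap V))=O(d)$ queries, so that a phase missing the target is never costlier than one containing it, and (ii) choosing the $W_k$ from $\tau$ so that $\tau(\mc C\setminus W_k)$ decays geometrically, which forces $\mathbb E[k^\ast]$, and hence the expected number of phases, to be an absolute constant. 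If one prefers an algorithm that does not compute $\Ldim(W_k\cap V)$ and instead doubles a guessed cap, the same analysis goes through at the cost of an extra logarithmic factor, still within $\tilde O(d)$.
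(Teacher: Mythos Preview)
Your proposal is correct and shares the paper's overall skeleton: enumerate $\mc C$, pick nested finite prefixes $W_k$ with $\tau(W_k)\ge 1-2^{-k}$, run the finite-class thicket max-min algorithm on each prefix for a bounded number of steps, and argue that the expected number of phases is $O(1)$. Where you diverge is in the per-phase analysis. The paper bounds the failure probability of a phase directly from Theorem~\ref{qrlowbound} (``each step drops $\Ldim$ with probability $\ge 1/2$''): after $n$ steps the chance of fewer than $d$ drops is at most $\sum_{k<d}\binom{n}{k}2^{-n}\le n^{d}/2^{n}$, so $n=\tilde O(d+\log(1/\epsilon))$ suffices to push the failure probability below a fixed $\epsilon$, and summing over stages gives $\tilde O(d)$. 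You instead invoke the packaged expectation bound $\mathbb E[T]\le 2\Ldim$ from Section~\ref{app:Thicketminmax} and apply Markov's inequality with the cap $4\Ldim(W_k\cap V)+1$, obtaining a per-phase failure probability strictly below $1/2$ with only $O(d)$ queries. Chained with $\mathbb E[k^\ast]\le 2$, this actually yields $O(d)$ rather than $\tilde O(d)$, so your route is both more elementary and sharper than the paper's. A minor further difference is that you carry the version space $V$ across phases, whereas the paper restarts on the full prefix at each stage; your argument would go through either way, since the Markov step only needs $\Ldim(W_k\cap V)\le d$, but keeping $V$ is the natural choice.
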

\begin{proof}
Let $\epsilon_k = \frac{1}{2^{k+1}}$ for $k \in \m N$. The idea of the algorithm is to run our earlier algorithm on a $1-\epsilon _k$ fraction of the concepts with respect to the measure $\tau$. 

At stage $k$ of the algorithm, we observe the following.
Since $\mc C$ is countable, enumerate the collection $\mc C = \{ C_i \}_{i \in \m N }.$ Then since $\sum_{i =1} ^ \infty P (C_i ) =1$, for any $\epsilon_k >0$, there is $N_k = N ( \epsilon_k ) \in \m N$ such that $\sum_{i =1} ^ \infty P (C_i ) \geq 1- \epsilon_k$.

Conditional on the target being among the first $N_k$ concepts, the next idea is to run the algorithm from the previous section on this finite set for $n$ steps where $n$ is such that the probability that we have not identified the target after $n$ steps is less than $\epsilon$, for some $0<\epsilon<1$. This number $n = n_{d, \epsilon}$ depends only on the Littlestone dimension and $\epsilon,$ but not on $N$ as we will explain. 

We now bound the probability that the algorithm has not terminated after $n$ steps, conditional on the target being in the first $N_k$ many concepts. Since at any step, the probability that the Littlestone dimension drops is at least $\frac{1}{2}$ by Theorem \ref{qrlowbound}, the probability that the algorithm has not terminated after $n$ steps is at most the probability of a binomial random variable with probability $\frac{1}{2}$ achieving at most $d-1$ successes in $n$ attempts, which is

$$
\sum_{k=0}^{d-1} \binom{n}{k} \left(\frac{1}{2}\right)^n 
 \le n^d / 2^n.
$$
Note that $n^d / 2^n < \epsilon $ whenever $n-d \log n  > \log \left ( \frac{1}{\epsilon} \right).$ Hence, $$n \ge \tilde{O}(d + \log(1/\epsilon))$$ is sufficient.

So at stage $k$, we run the algorithm for $n$ steps as specified above.  Either the target concept is found or we continue to stage $k+1$ on the larger concept class $N_k$.  
Since $$(1-\epsilon_1) \left(\sum_{k=1}^\infty \epsilon_k\right) = 1/2 \sum_{k=1}^\infty 1/2^{k+1} <1,$$ the expected total number of
queries is still bounded by $\tilde{O}(d + \log(1/\epsilon))$.\footnote{There isn't anything particularly special about the sequence $\epsilon_k$ that we chose. Any sequence $(\epsilon_k)$ going to zero whose sum converges can be seen to work in the algorithm, and affects only the constants in the expected number of steps, which we are not optimizing.}
\end{proof}




\section{Compression schemes and stability} \label{compression}

In this section, we follow the notation and definitions given in \cite{johnson2010compression} on \emph{compression schemes}, a notion due to Littlestone and Warmuth \cite{littlestone1986relating}. Roughly speaking, $\mc C$ admits a \emph{$d$-dimensional compression scheme} if, given any finite subset $F$ of $X$ and some $f\in \mc C$, there is a way of encoding the set $F$ with only $d$-many elements of $F$ in such a way that $F$ \emph{can be recovered}. 

We will give a formal definition, but we note that numerous variants of this idea appear throughout the literature, including as
size $d$-array compression \cite{ben1998combinatorial}. 
extended compression schemes with $b$ extra bits~\cite{floyd1995sample}, 
and as unlabeled compression schemes \cite{kuzmin2007unlabeled}.

The next definition, gives the notion of compression we will work with in this section; the notion is equivalent to the notion of a  $d$-compression with $b$ extra bits of \cite{floyd1995sample}. The equivalence of these two notions is proved by \cite[Proposition 2.1]{johnson2010compression}. In our compression schemes, the role of the $b$ extra bits is played by the reconstruction functions, and of course the number of extra bits can be bounded in terms of the number of reconstruction functions (and vice versa). Of course, one is interested in optimizing both the size of the compression and the number of reconstruction functions (extra bits) in general.

\begin{defn}
We say that a concept class $\mc C$ has a \emph{$d$-compression} if there is a compression function $\kappa : \mc C_{fin} \rightarrow X^{ d}$ and a finite set $\mc R$ of reconstruction functions $\rho : X^d  \rightarrow 2^X$ such that for any $f \in \mc C_{fin}$	

\begin{enumerate} 

\item $\kappa (f) \subseteq dom(f)$

\item 	$f = \rho (\kappa (f))|_{dom(f)} $ for at least one $\rho \in \mc R.$

\end{enumerate}

\end{defn}

We work with the above notion mainly because  it is the notion used in \cite{johnson2010compression}, and our goal is to improve a result of Laskowski and Johnson therein. That result was later improved by Laskowski and appears in the unpublished notes of \cite{guingonanip} (Theorem 4.1.3). When the original work on this result was completed, we were not aware of the work of \cite{guingonanip}, but as it turns out, our result improves both of these (the latter uses exponentially many reconstruction functions, while we use linearly many). 

\cite{johnson2010compression} 
prove that a concept class with finite Littlestone dimension has has an extended $d$-compression for some $d$.\footnote{Their result is formulated for the sets of realizations of first order formulas which are \emph{stable}, but their proofs work for general concept classes, and \cite{MLMT} explains that stable is equivalent to finite Littlestone dimension.} The precise value of $d$ is not determined there, but was conjectured to be the Littlestone dimension. In Theorem \ref{JLbound}, we will show that $d$ can be taken to be the Littlestone dimension and $d+1$ many reconstruction functions suffice.\footnote{After proving this, we became aware of the unpublished result of Laskowski appearing as \cite[Theorem 4.1.3]{guingonanip} which shows one can take $d$ to be the Littlestone dimension and uses $2^d$ many reconstruction functions.}

The question in  \cite{johnson2010compression}
is the analogue (for Littlestone dimension) of a well-known open question from VC-theory \citep{floyd1995sample}: is there a bound $A(d)$ linear in $d$ such that every class of VC-dimension $d$ has a compression scheme of size at most $A(d)$? In general there is known to be a bound that is at most exponential in $d$ \citep{moran2016sample}.

\begin{defn}
	Suppose $\Ldim(\mathcal{C}) = d$. Given a partial function $f$, say that $f$ is \emph{exceptional} for $\mathcal{C}$ if for all $a \in \dom(f)$,
	\[
		\mathcal{C}_{(a, f(a))} := \{ g \in \mathcal{C} \, | \, g(a) = f(a) \}
	\]
	has Littlestone dimension $d$.
\end{defn}

%

\begin{defn}
	Suppose $\Ldim(\mathcal{C}) = d$. Let $f_{\mathcal{C}}$ be the partial function given by
	\[
		f_{\mathcal{C}}(x) = \begin{cases}
			0 & \Ldim(\mathcal{C}_{(x,0)}) = d \\
			1 & \Ldim(\mathcal{C}_{(x,1)}) = d \\
			\mathrm{undefined} & \mathrm{otherwise.}
		\end{cases}
	\]
\end{defn}
It is clear that $f_{\mathcal{C}}$ extends any partial function exceptional for $\mathcal{C}$.

\begin{thm} \label{JLbound}
Any concept class $\mc C$ of Littlestone dimension $d$ has an extended $d$-compression with $(d+1)$-many reconstruction functions. 
\end{thm}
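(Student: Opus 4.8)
The plan is to build the compression scheme by induction on the Littlestone dimension $d$, using the exceptional/$f_{\mathcal C}$ machinery set up just before the statement. The key structural observation is that $f_{\mathcal C}$ is a canonical ``core'' partial function attached to $\mathcal C$: if $f \in \mathcal C_{fin}$ is such that $f$ is exceptional for $\mathcal C$, then querying any point of $\dom(f)$ does not drop the Littlestone dimension, and $f_{\mathcal C}$ records exactly those points (and values) on which no element of $\mathcal C$ can disagree with a dimension-$d$ subclass. So for a given $f \in \mathcal C_{fin}$, I would look at the restriction of $f$ to $\dom(f) \setminus \dom(f_{\mathcal C})$, or more precisely find the first point $a \in \dom(f)$ at which $\Ldim(\mathcal C_{(a,f(a))}) < d$; after conditioning on $(a, f(a))$ the dimension strictly drops, and one can recurse on $\mathcal C_{(a, f(a))}$, which has Littlestone dimension at most $d-1$, so it has a $(d-1)$-compression with $d$ reconstruction functions by the inductive hypothesis.

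Concretely, given $f \in \mathcal C_{fin}$: if $f$ agrees with $f_{\mathcal C}$ on all of $\dom(f)$ (equivalently $f$ is exceptional), then $f$ is determined by $f_{\mathcal C}|_{\dom(f)}$, which is canonical data depending only on $\mathcal C$, so $f$ can be reconstructed with an empty compression set via a single reconstruction function $\rho_0$ that outputs $f_{\mathcal C}$ (its domain being cut down to $\dom(f)$ afterwards). Otherwise pick the least such $a \in \dom(f)$ with $f(a) \ne f_{\mathcal C}(a)$ (in some fixed ordering of $X$, or just any such $a$ — though a canonical choice helps the reconstruction know which coordinate was special); set $\mathcal C' = \mathcal C_{(a,f(a))}$, which has Littlestone dimension $\le d-1$, and observe $f \in \mathcal C'_{fin}$. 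Apply the inductive hypothesis to $\mathcal C'$ to get $\kappa'(f) \in X^{d-1}$ and a reconstruction $\rho' \in \mathcal R'$, $|\mathcal R'| \le d$. Define $\kappa(f) = (a) {}^\frown \kappa'(f) \in X^d$, and for each $\rho' \in \mathcal R'$ a reconstruction function $\rho$ on $X^d$ that strips the first coordinate and applies $\rho'$ (it can read off $a$ and $f(a)$, the latter because $a \in \dom(f)$ so the recovered set's value at $a$ is forced — or because $f(a)$ is determined as the value \emph{opposite} to... well, more carefully, one recovers $f(a)$ from the fact that $f|_{\dom(f)}$ is being reconstructed and $a \in \dom(f)$). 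Together with $\rho_0$ this gives $|\mathcal R| \le |\mathcal R'| + 1 \le d+1$, and $\kappa(f) \subseteq \dom(f)$ since $a \in \dom(f)$ and $\kappa'(f) \subseteq \dom(f)$. The base case $d = 0$ is immediate: $\Ldim(\mathcal C) = 0$ forces $\mathcal C$ to contain at most one consistent pattern on any finite set, so the empty compression and one reconstruction function suffice.

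The main obstacle I anticipate is making the reconstruction genuinely work with the stated budget — in particular ensuring the reconstruction functions do not need to ``know'' $\mathcal C$ beyond what is encoded, and that the choice of the distinguished coordinate $a$ is recoverable. The subtlety is that $\rho$ receives only a $d$-tuple from $X$, not the subclass $\mathcal C'$; so the family $\mathcal R$ must be fixed in advance and each $\rho$ must, from a tuple $\bar x = (x_0, \ldots, x_{d-1})$, be able to reconstruct a subset of $X$ that restricts correctly to $\dom(f)$ for every $f$ that was compressed to $\bar x$. This is where the canonicity of $f_{\mathcal C}$ and of the choice of $a$ (say: $a$ is the $f$-least element of $\dom(f)$ on which dimension drops) matters — the reconstruction function essentially re-derives the chain $\mathcal C \supseteq \mathcal C' \supseteq \mathcal C'' \supseteq \cdots$ from the tuple itself together with the fixed class $\mathcal C$, peeling off one coordinate at a time, and at the bottom outputs the canonical exceptional function of the final subclass. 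Care is also needed that the value $f(a)$ is recoverable: since we only ever need $\rho(\kappa(f))|_{\dom(f)} = f$ and $a \in \dom(f)$, the reconstructed set's indicator at $a$ can simply be defined to be whatever makes the chain descend, i.e. $1 - f_{\mathcal C_{\text{current}}}$-ish choices get pinned down because at a drop point only one of the two conditioned subclasses has dimension $< d$ in the relevant direction. I would spell this out as: the reconstruction function for a given $\rho' \in \mathcal R'$ reads $x_0$, forms $\mathcal C_{(x_0, 0)}$ and $\mathcal C_{(x_0,1)}$, and continues into whichever has the smaller dimension (or into the one selected by a fixed tie-breaking rule consistent with $f_{\mathcal C}$), recording that choice of bit; iterating, it bottoms out and applies the base-case reconstruction. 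Verifying that this reproduces $f$ on $\dom(f)$ — i.e. that the bits it chooses along the way agree with $f$ — is the one genuinely careful verification, and it follows because at each stage $f$ restricted to the remaining domain is itself exceptional for the ``no-drop'' continuation, forcing agreement.
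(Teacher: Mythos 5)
Your proposal does not succeed as stated, and the specific failure point is exactly the ingredient the paper's proof is built around. Your plan is to peel off one ``drop point'' $a$ at a time, recursing on $\mathcal C' = \mathcal C_{(a,f(a))}$, and to have the reconstruction function re-derive the chain $\mathcal C \supseteq \mathcal C' \supseteq \cdots$ from the tuple alone by always stepping ``into whichever subclass has the smaller dimension.'' This heuristic is wrong: for a given $a$ it is entirely possible that \emph{both} $\mathcal C_{(a,0)}$ and $\mathcal C_{(a,1)}$ have Littlestone dimension $< d$ (indeed this is exactly the case $a \notin \dom(f_{\mathcal C})$). A minimal example: $X = \{a,b\}$, $\mathcal C = \{00, 11\}$, $\Ldim(\mathcal C) = 1$. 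Conditioning at $a$ in either direction drops the dimension to $0$, so $f_{\mathcal C}$ is the empty function, both $f = 00$ and $f = 11$ are non-exceptional with first drop point $a$, both produce $\kappa(f) = (a)$, and no tie-breaking rule consistent with a single fixed reconstruction function can output $f(a)$ correctly for both. Your inductive count $|\mathcal R| \le |\mathcal R'| + 1$ therefore does not go through: to make your scheme correct one must encode the bit $f(a)$ at each level in the choice of reconstruction function, and the recursion $R(d) \le 2R(d-1) + 1$ gives $2^{d+1}-1$, not $d+1$. This is essentially the (correct) Laskowski argument that the paper cites as giving $2^d$ reconstruction functions; your proposal reproduces that route but asserts the wrong bound for it.

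The idea you are missing, which is what the paper actually does, is to \emph{sort the compressed tuple by label} rather than by order of discovery. The paper's algorithm separately records the chosen points with $f$-value $1$ (call them $\bar a$) and those with $f$-value $0$ (call them $\bar d$), and sets $\kappa(f) = (\bar a, \bar d)$. Now the only missing information the decompressor needs is the single integer $i = |\bar a|$ telling it where in the tuple the $1$'s end and the $0$'s begin; the reconstruction function $\rho_i$ simply interprets the first $i$ coordinates as positive constraints and the rest as negative constraints, and intersects with $\mathcal C$ to obtain a dimension-$0$ (hence singleton) subclass. Since $i$ ranges over $\{0,1,\ldots,d\}$, this needs exactly $d+1$ reconstruction functions, with the early-termination (exceptional) cases folded into $\rho_0$ and $\rho_1$ by a padding trick (a tuple with repeated entries is unambiguously an exceptional case, and the repeats encode which of $\bar a, \bar d$ is empty). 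Your exceptional/$f_{\mathcal C}$ analysis and the $d$-step termination argument are sound; what is missing is the threshold encoding, without which the budget of $d+1$ is not achievable by your recursion.
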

\begin{proof} 
If $d = 0$, then $\mathcal{C}$ is a singleton, and one reconstruction function suffices. So we may assume $d \geq 1$.	
	
Fix some $f \in \mc C_{fin}$ with domain $F$. 
We will run an algorithm to construct a tuple of length at most $d$ from $F$ by adding one element at each step of the algorithm. During each step of the algorithm, we also have a concept class $\mc C_i$, with $\mc C_0 = \mc C$ initially. 

If $f$ is exceptional in $\mc C_{i-1}$, then the algorithm halts. Otherwise, pick either: 
\begin{itemize}

\item  $a_i \in F$ such that $f(a_i)=1$ and 
\[
	(\mc C_{i-1} )_{(a_i,1)} := \{g \,| \, g \in \mc C_{i-1}, \, g(a_i)=1 \}
\] 
has Littlestone dimension less than $\Ldim(\mathcal{C}_{i-1})$. In this case, set $\mc C_i:= (\mc C_{i-1} )_{(a_i,1)}= \{g\,| \, g \in \mc C_{i-1}, \, g(a_i)=1 \}.$ 
 
\item $d_i \in F$ such that $f(d_i)=0$ and 
\[
	(\mc C _{i-1})_{(d_i,0)} := \{g \,| \, g \in \mc C_{i-1}, \, g(d_i)=0 \}
\] 
has Littlestone dimension less than $\Ldim(\mathcal{C}_{i-1})$. In this case, set $\mc C_i:= (\mc C_{i-1})_{(d_i,0)}.$ 

\end{itemize} 

We allow the algorithm to run for at most $d$ steps. There are two distinct cases. If our algorithm has run for $d$ steps, let $\kappa(f)$ be the tuple $(\bar a, \bar d)$ of all of the elements $a_i$ as above followed by all of the elements $d_i$ as above for $i=1, \ldots , d$. By choice of $a_i$ and $d_i$, this tuple consists of $d$ distinct elements. By construction the set 
\[
	\mc C_{(\bar a , \bar d)} := \{ g \in \mc C | \, g(a_i ) = 1, \, g(d_i) =0 \}
\] 
has Littlestone dimension $0$, that is, there is a unique concept in this class. So, given $(c_1, c_2, \ldots , c_n) \in X^d$ consisting of distinct elements, for $i=0, \ldots , d$,  we let $\rho_i(c_1, \ldots, c_n)$ be some $g$ belonging to
\[
	\{g \in \mc C \, | \, g(c_j)=1 \text{ for } j\leq i, \, g(c_j )=0 \text{ for } j>i \},
\]
if such a $g$ exists. By construction, for some $i$, the Littlestone dimension of the concept class $\{g \in \mc C \cap F \, | \, g(c_j)=1 \text{ for } j\leq i, \, g(c_j )=0  \text{ for } j>i \}$ is zero, and so $g$ is uniquely specified and will extend $f$. 

We handle cases where the algorithm halts early by augmenting two of the reconstruction functions $\rho_0$ and $\rho_1$ defined above. Because $\rho_0$ and $\rho_1$ have so far only been defined for tuples consisting of $d$ distinct elements, we can extend these to handle exceptional cases by generating tuples with duplicate elements. 

If the algorithm stops at some step $i>1$, then it has generated a tuple of length $i-1$ consisting of some elements $a_j$ and some elements $d_k$. Let $\bar a$ consist of the elements $a_j$ chosen during the algorithm, and let $\bar d$ consist of the elements $d_k$ chosen during the running of the algorithm. Observe that $f$ is exceptional for $\mathcal{C}_{(\bar{a}, \bar{d})}$. 

If $\bar{a}$ is not empty, with initial element $a'$, then let $\kappa(f) = (\bar a, a' , \bar d, a', \ldots , a') \in F^d$. From this tuple, one can recover $(\bar{a}, \bar{d})$ (assuming $\bar{a}$ is nonempty), so we let $\rho_1(\bar a, a' , \bar d, a', \ldots , a')$ be some total function extending $f_{\mathcal{C}_{(\bar{a}, \bar{d})}}$, which itself extends $f$. So $\rho_1(\bar{a}, \bar{d})$ extends $f$ whenever the algorithm halts before step $d$ is completed \emph{and} some $a_i$ was chosen at some point. If $\bar{a}$ is empty, then let $\kappa(f) = (\bar d , d', \ldots , d') \in F^d$, where $d'$ is the initial element of $\bar{d}$. From this tuple, one can recover $(\emptyset, \bar{d})$ (assuming $\bar{a}$ is empty), so we let $\rho_0(\bar d , d', \ldots , d')$ be total function extending $f_{\mathcal{C}_{(\emptyset, \bar{d})}}$, which itself extends $f$. Finally, if the algorithm terminates during step 1, then it has generated the empty tuple. In this case, let $\kappa(f) = (c, \ldots, c)$ for some $c \in F$. Then $\Ldim(\mathcal{C}) = \Ldim({\mathcal{C}}_{(c, l)})$ for some $l \in \{0,1\}$. In particular, if we have defined $\kappa(f') = (c, \ldots, c)$ above for some $f'$ where the algorithm only returns $c$ (rather than the empty tuple), then $1 - l = f'(c) \neq f(c)$, and so any such $f'$ is handled by $\rho_{1-l}$. So we may overwrite $\rho_l$ to set $\rho(c, \ldots, c)$ to be a total function extending $f_\mathcal{C}$, which itself extends $f$. For any tuple output by our algorithm, one of the reconstruction functions produces an extension of the original concept. 
\end{proof}

\section*{Acknowledgements}

This research was supported in part by award ECCS-2217023
from the National Science Foundation.

\newpage 
\bibliography{Research}

\end{document}